\newcommand{\average}[1]{\ensuremath{\langle#1\rangle} }
\newcommand{\numall}{1142}
\newcommand{\pr}{0.48}
\newcommand{\argmin}{\mathop{\rm arg~min}\limits}
\newtheorem{prop}{Proposition}
\newcommand{\secref}[1]{\S \ref{#1}}
\newcommand{\figref}[1]{Figure~\ref{#1}}
\newcommand{\tabref}[1]{Table~\ref{#1}}
\title{Taylor's Law for Human Linguistic Sequences }
\author{Tatsuru Kobayashi\thanks{~ {\tt kobayashi@cl.rcast.u-tokyo.ac.jp}} \\
	Graduate School of \\ Information Science and Technology,\\ University of Tokyo\\
	7-3-1 Hongo, Bunkyo-ku\\ Tokyo 113-8656\\ Japan \\\And 
	Kumiko Tanaka-Ishii\thanks{~ {\tt kumiko@cl.rcast.u-tokyo.ac.jp}} \\
	Research Center for \\ Advanced Science and Technology,\\ University of Tokyo\\
	4-6-1 Komaba, Meguro-ku\\ Tokyo 153-8904\\ Japan \\
}
\date{}
\begin{document}

\maketitle

\begin{abstract}
Taylor's law describes the fluctuation characteristics underlying a system in which the 
variance of an event within a time span grows by a power law with respect to the mean. 
Although Taylor's law has been applied in many natural and social systems, its 
application for language has been scarce. This article describes a new
quantification of Taylor's law in natural language and reports an  analysis of over 
1100 texts across 14 languages. The Taylor exponents of written natural language texts 
were found to exhibit almost the same value. The exponent was also compared for other 
language-related data, such as the child-directed speech, music, and programming 
language code. The results show how the Taylor exponent serves to quantify the 
fundamental structural complexity underlying linguistic time series. The article also 
shows the applicability of these findings in evaluating language models.
\end{abstract}

\section{Introduction}
\label{sec:introduction}
Taylor's law characterizes how the variance of the number of events for a given time 
and space grows with respect to the mean, forming a power law. It is a 
quantification method for the clustering behavior of a system. Since the pioneering 
studies of this concept \citep{taylor-smith,taylor-nature}, a substantial number of 
studies have been conducted across various domains, including ecology, life science, physics, 
finance, and human dynamics, as well summarized in \citep{taylor}. More recently, 
\citet{Cohen7749} reported Taylor exponents for random sampling from various 
distributions, and \citet{Calif2015} reported Taylor's law in wind energy data using a 
non-parametric regression. Those two papers also refer to research about Taylor's law in 
a wide range of fields.

Despite such diverse application across domains, there has been little analysis based on 
Taylor's law in studying natural language. The only such report, to the best of our 
knowledge, is \citet{altmann14-taylor}, but they measured the mean and variance by 
means of the vocabulary size within a document. This approach essentially differs from 
the original concept of Taylor analysis, which fundamentally counts the number of 
events, and thus the theoretical background of Taylor's law as presented in 
\citet{taylor} cannot be applied to interpret the results.

For the work described in this article, we applied Taylor's law for texts, in a manner 
close to the original concept. We considered lexical fluctuation within texts, which 
involves the co-occurrence and burstiness of word alignment. The results can thus be 
interpreted according to the analytical results of Taylor's law, as described later. We 
found that the Taylor exponent is indeed a characteristic of texts and is universal across 
various kinds of texts and languages. These results are shown here for data including 
over 1100 single-author texts across 14 languages and large-scale newspaper data.

Moreover, we found that the Taylor exponents for other symbolic sequential data, 
including child-directed speech, programming language code, and music, differ from 
those for written natural language texts, thus distinguishing different kinds of data 
sources. The Taylor exponent in this sense could categorize and quantify the structural 
complexity of language. The Chomsky hierarchy \citep{chomsky} is, of course, the 
most important framework for such categorization. The Taylor exponent is another way 
to quantify the complexity of natural language: it allows for continuous quantification 
based on lexical fluctuation.

Since the Taylor exponent can quantify and characterize one aspect of natural language, 
our findings are applicable in computational linguistics to assess language models. At 
the end of this article, in \secref{sec:lm}, we report how the most basic character-based 
long short-term memory (LSTM) unit produces texts with a Taylor exponent of 0.50, 
equal to that of a sequence of independent and identically distributed random 
variables (an i.i.d. sequence). This shows how such models are limited in producing 
consistent co-occurrence among words, as compared with a real text. Taylor analysis 
thus provides a possible direction to reconsider the limitations of language models.

\section{Related Work}
This work can be situated as a study to quantify the complexity underlying texts. As 
summarized in \citep{Tanaka-Ishii2015}, measures for this purpose include the entropy 
rate \citep{entropy16, entropy17} and those related to the scaling behaviors of 
natural language. Regarding the latter, certain power laws are known to hold universally 
in linguistic data. The most famous among these are Zipf's law \citep{zipf} and Heaps' 
law \citep{heaps}. Other, different kinds of power laws from Zipf's law are obtained 
through various methods of fluctuation analysis, but the question of how to quantify the 
fluctuation existing in language data has been controversial. Our work is situated as one such 
case of fluctuation analysis.

In real data, the occurrence timing of a particular event is often biased in a bursty, 
clustered manner, and fluctuation analysis quantifies the degree of this bias. Originally, 
this was motivated by a study of how floods of the Nile River occur in clusters (i.e., 
many floods coming after an initial flood) \citep{hurst}. Such clustering phenomena 
have been widely reported in both natural and social domains \citep{taylor}.

Fluctuation analysis for language originates in \citep{Ebeling1994}, which applied the 
approach to characters. That work corresponds to observing the average of the variances 
of each character's number of occurrences within a time span. Their method is strongly 
related to ours but different from two viewpoints: (1) Taylor analysis considers the 
variance with respect to the mean, rather than time; and (2) Taylor analysis does not 
average results over all elements. Because of these differences, the method in 
\citep{Ebeling1994} cannot distinguish real texts from an i.i.d. process when applied to 
word sequences \citep{takahashi}.

%Apart from that work, \citet{montemurro} applied Hurst's method
%\citep{hurst} directly to text, but since the method applies primarily
%to numerical data, they transformed a language sequence into rank
%sequences, and the results thus remain obscure.

Event clustering phenomena cause a sequence to resemble itself in a self-similar manner. 
Therefore, studies of the fluctuation underlying a sequence can take another form of 
{\em long-range correlation analysis}, to consider the similarity between two 
subsequences underlying a time series. This approach requires a function to calculate 
the similarity of two sequences, and the autocorrelation function (ACF) is the main 
function considered. Since the ACF only applies to numerical data, both 
\citet{Altmann2009} and \citet{plosone16} applied long-range correlation analysis by 
transforming text into intervals and showed how natural language texts are long-range 
correlated. Another recent work \citep{Lin_2016} proposed using mutual information 
instead of the ACF. Mutual information, however, cannot detect the long-range 
correlation underlying texts. All these works studied correlation phenomena via only a 
few texts and did not show any underlying universality with respect to data and 
language types. One reason is that analysis methods for long-range correlation are 
non-trivial to apply to texts.

Overall, the analysis based on Taylor's law in the present work belongs to the former 
approach of fluctuation analysis and shows the law's vast applicability and stability for 
written texts and even beyond, quantifying universal complexity underlying human 
linguistic sequences.

\section{Measuring the Taylor Exponent}
\label{sec:theory}
\subsection{Proposed method}
\label{subsec:method}
Given a set of elements $W$ (words), let $X = X_1,X_2,\ldots,X_{N}$ be a discrete 
time series of length $N$, where $X_i \in W$ for all $i=1, 2, \ldots, N$, i.e., each 
$X_{i}$ represents a word. For a given segment length $\Delta t \in \mathbb{N}$ (a 
positive integer), a data sample $X$ is segmented by the length $\Delta t$. The number 
of occurrences of a specific word $w_k \in W$ is counted for every segment, and the 
mean $\mu_k$ and standard deviation $\sigma_k$ across segments are obtained. 
Doing this for all word kinds $w_{1}, \ldots, w_{|W|} \in W$ gives the distribution
of $\sigma$ with respect to $\mu$. Following a previous work \citep{taylor}, in this article Taylor's law 
is defined to hold when $\mu$ and $\sigma$ are correlated by a power law in the 
following way:
\begin{equation}
 \sigma \propto \mu^\alpha .
\end{equation}
Experimentally, the Taylor exponent $\alpha$ is known to take a value within the range 
of $0.5 \leq \alpha \leq 1.0$ across a wide variety of domains as reported in 
\citep{taylor}, including finance, meteorology, agriculture, and biology. Mathematically, 
it is analytically proven that $\alpha = 0.5$ for an i.i.d process, and the proof is 
included as Supplementary Material.

On the other hand, $\alpha=1.0$ when all segments {\em always contain the same 
proportion of the elements of $W$}. For example, suppose that $W = \{a, b\}$. If 
$b$ always occurs twice as often as $a$ in all segments (e.g., three $a$ and six $b$ in 
one segment, two $a$ and four $b$ in another, etc.), then both the mean and standard 
deviation for $b$ are twice those for $a$, so the exponent is 1.0.

In a real text, this cannot occur for all $W$, so $\alpha < 1.0$ for natural language text. 
Nevertheless, for a subset of words in $W$, this could happen, especially for a 
template-like sequence. For instance, consider a programming statement: {\tt while (i < 
1000) do i--}. Here, the words {\tt while} and {\tt do} always occur once, whereas {\tt 
i} always occurs twice. This example shows that the exponent indicates how 
consistently words depend on each other in $W$, i.e., how words co-occur 
systematically in a coherent manner, thus indicating that the Taylor exponent is partly 
related to grammaticality.

To measure the Taylor exponent $\alpha$, the mean and standard deviation are 
computed for every word kind\footnote{ In this work, words are not lemmatized, e.g. 
``say,'' ``said,'' and ``says'' are all considered different words. This was chosen so in this 
work because the Taylor exponent considers systematic co-occurrence of words, and 
idiomatic phrases should thus be considered in their original forms. } and then plotted in 
log-log coordinates. The number of points in this work was the number of different 
words. We fitted the points to a linear function in log-log coordinates by the 
least-squares method. We naturally took the logarithm of both $c \mu^\alpha$ and 
$\sigma$ to estimate the exponent, because Taylor's law is a power law. The coefficient 
$\hat{c,}$ and exponent $\hat{\alpha}$ are then estimated as the following:
\begin{eqnarray}
\scriptsize
 \hat{c}, \hat{\alpha} &=& \argmin_{c, \alpha} \epsilon(c, \alpha), \nonumber \\
\epsilon(c, \alpha) &=& \sqrt{\frac{1}{|W|}\sum_{k=1}^{|W|} (\log \sigma_k - \log c 
\mu_k^\alpha)^2}.  \nonumber
\end{eqnarray}
This fit function could be a problem depending on the distribution of errors between the 
data points and the regression line. As seen later, the error distribution seems to 
differ with the kind of data: for a random source the error seems Gaussian, and so the 
above formula is relevant, whereas for real data, the distribution is biased. Changing the 
fit function according to the data source, however, would cause other essential problems 
for fair comparison. Here, because \citet{Cohen7749} reported that most empirical 
works on Taylor's law used least-squares regression (including their own), this work 
also uses the above scheme\footnote{The code for estimating the exponent is available 
from \url{https://github.com/Group-TanakaIshii/word_taylor}.}, with the error defined as 
$\epsilon(\hat{c}, \hat{\alpha})$.

%- GutenbergはDie Reden Gotamo Buddhos(German)を含まない。
%- Hongloumengは/data/texts/chinese/peopleDaily/processed/koromu\_word\_line2.txtを使用。

\begin{table*}[t]
%\centering
  \caption{\label{tab:data}
    Data we used in this article. 
    For each dataset, 
    length is the number of words, 
    vocabulary is the number of different words. 
    For detail of the data kind, see \secref{subsec:data}.}

  %    captionを表の下にする}%Section 3, left column, page 4.     
 %%  First block contains corpora containing single-authored works
 %%  available in web.
 %%  Second block contains single-authored works written in various language.
 %%  Third block contains newspapers with multiple authors.
 %%  % enwikiは3番目に入れるべき？要検討。
 %%  Fourth block contains language-related sequences
 %%  such as infants utterances or programming source codes.
 %%  Last block contains natural language shuffled based on words
 %%  and texts generated by language models.
 %%  }

  \fontsize{7.5pt}{12pt}\selectfont

	\hspace{-0.5cm}
        \begin{tabular}{|l|c|c|r|r|r|r|r|r|r|} \hline
          Texts &   Language    &$\hat{\alpha}$    &    Number &  \multicolumn{3}{|c|}{Length}  & \multicolumn{3}{|c|}{Vocabulary} \\  \cline{5-10}
  & & mean &   of samples                & Mean  & Min & Max  & Mean & Min & Max \\ \hline \hline
               & English & 0.58 & 910 & 313127.4 & 185939 & 2488933 & 17237.7 & 7321 & 69812 \\ \cline{2-10}
	  & French & 0.57 & 66 & 197519.3 & 169415 & 1528177 & 22098.3 & 14106 & 57193 \\ \cline{2-10}
		& Finnish & 0.55 & 33 & 197519.3 & 149488 & 396920 & 33597.1 & 26275 & 47263 \\ \cline{2-10}
		& Chinese & 0.61 & 32 & 629916.8 & 315099 & 4145117 & 15352.9 & 9153 & 60950 \\ \cline{2-10}
		& Dutch & 0.57 & 27 & 256859.2 & 198924 & 435683 & 19159.1 & 13880 & 31595 \\ \cline{2-10}
		& German & 0.59 & 20 & 236175.0 & 184321 & 331322 & 24242.3 & 11079 & 37228 \\ \cline{2-10}
Gutenberg       & Italian & 0.57 & 14 & 266809.0 & 196961 & 369326 & 29103.5 & 18641 & 45032 \\ \cline{2-10}
		& Spanish & 0.58 & 12 & 363837.2 & 219787 & 903051 & 26111.1 & 18111 & 36507 \\ \cline{2-10}
		& Greek & 0.58 & 10 & 159969.2 & 119196 & 243953 & 22805.7 & 15877 & 31386 \\  \cline{2-10}
		& Latin & 0.57 & 2 & 505743.5 & 205228 & 806259 & 59667.5 & 28739 & 90596 \\ \cline{2-10}
		& Portuguese & 0.56 & 1 & 261382.0 & 261382 & 261382 & 24719.0 & 24719 & 24719 \\ \cline{2-10}
		& Hungarian & 0.57 & 1 & 198303.0 & 198303 & 198303 & 38384.0 & 38384 & 38384 \\ \cline{2-10}
		& Tagalog & 0.59 & 1 & 208455.0 & 208455 & 208455 & 26335.0 & 26335 & 26335 \\ \hline
	Aozora & Japanese & 0.59 & 13 & 616677.2 & 105343 & 2951320 & 19760.0 & 6620 & 49100 \\ \hline
        \hline
        Moby Dick  & English & 0.58 & 1 & 254655.0 & 254655 & 254655 & 20473.0 & 20473 & 20473 \\ \hline
	Hong Lou Meng  & Chinese & 0.59 & 1 & 701256.0 & 701256 & 701256 & 18451.0 & 18451 & 18451 \\ \hline
       Les Miserables  & French & 0.57 & 1 &  691407.0 & 690417 & 690417 & 31956.0 &   31956 &   31956 \\ \hline   
	Shakespeare (All) & English & 0.59 & 1 & 1000238.0 & 1000238 & 1000238 & 40840.0 & 40840 & 40840 \\ \hline
       \hline
        
	WSJ & English & 0.56 & 1 & 22679513.0 & 22679513 & 22679513 & 137467.0 & 137467 & 137467 \\ \hline
		NYT & English & 0.58 & 1 & 1528137194.0 & 1528137194 & 1528137194 & 3155495.0 & 3155495& 3155495 \\ \hline
		People's Daily &  Chinese &0.58 & 1 & 19420853.0 & 19420853 & 19420853 & 172140.0 & 172140& 172140 \\ \hline
		Mainichi & Japanese & 0.56 & 24 (yrs) & 31321594.3 & 24483331 & 40270706 & 145534.5 & 127290 & 169270 \\ \hline
		%(Mainichi2003) & Japanese & 1 & 34312355.0 & 34312355 & 34312355 & 145079.0 & 145079 & 145079 \\ \hline \hline
                \hline
		enwiki8 & tag-annotated  & 0.63 & 1 & 14647848.0 & 14647848 & 14647848 & 1430791.0 & 1430791 & 1430791 \\ \hline  
		CHILDES & various & 0.68 & 10  & 193434.0 & 48952 & 448772 & 9908.0 & 5619 & 17893 \\ \hline
		Programs & - & 0.79 & 4 & 34161018.8 & 3697199 & 68622162 & 838907.8 & 127653 & 1545127 \\ \hline
		Music & - & 0.79 & 12 & 135993.4 & 76629 & 215480 & 9187.9 & 907 & 27043 \\ \hline
		\hline
		Moby Dick (shuffled) & - & 0.50 &10 & 254655.0 & 254655 & 254655 & 20473.0 & 20473 & 20473 \\ \hline
		Moby Dick (bigram) & - & 0.50 & 10 & 300001.0 & 300001 & 300001 & 16963.8 & 16893 & 17056 \\ \hline
%		Pitman-Yor & & 11 & 1000001.0 & 1000001 & 1000001 & 99878.9 & 99614 & 100171 \\ \hline		
                3-layer stacked LSTM & \multirow{2}{*}{(English)} & \multirow{2}{*}{0.50} & \multirow{2}{*}{1} & \multirow{2}{*}{256425.0} & \multirow{2}{*}{256425} & \multirow{2}{*}{256425} & \multirow{2}{*}{50115.0} & \multirow{2}{*}{50115} & \multirow{2}{*}{50115} \\
                (character-based)   &  &  &  & & &  &  & &  \\ \hline		
                Neural  MT & (English) & 0.57 & 1 & 623235.0 & 623235 & 623235 & 27370.0 & 27370 & 27370 \\ \hline 
                
     %		Shakespeare (shuffled) & & 1 & 1000238.0 & 1000238 & 1000238 & 40840.0 & 40840 & 40840 \\ \hline
%		Shakespeare (bigram) & & 1 & 1000001.0 & 1000001 & 1000001 & 32554.0 & 32554 & 32554 \\ \hline
	\end{tabular}
\end{table*}

%\footnotetext{half size text}

\subsection{Data}
\label{subsec:data}
\tabref{tab:data} lists all the data used for this article. The data consisted of natural 
language texts, language-related sequences, and randomized data, listed as different 
blocks in the table. The natural language texts consisted of \numall\ single-author long 
texts (first block, extracted from Project Gutenberg and Aozora Bunko across 14 
languages\footnote{All texts above a size threshold (1 megabyte) were extracted from 
the two archives, resulting in 1142 texts.}, with the second block listing individual 
samples taken from Project Gutenberg together with the complete works of 
Shakespeare), and newspapers (third block, from the Gigaword corpus, available from 
the Linguistic Data Consortium in English, Chinese, and other major languages). Other 
sequences appear in the fourth block: the enwiki8 100-MB dump dataset (consisting of 
tag-annotated text from English Wikipedia), the 10 longest child-directed speech 
utterances in CHILDES data\footnote{Child Language Data Exchange System 
\citep{childes,childes-dutch, childes-english, childes-french, 
childes-german,childes-indonesian, childes-japanese, childes-polish, 
childes-serbian,childes-spanish, childes-swedish}} (preprocessed by extracting only 
children's utterances), four program sources (in Lisp, Haskell, C++, and Python, 
crawled from large representative archives, parsed, and stripped of natural language 
comments), and 12 pieces of musical data (long symphonies and so forth, transformed 
from MIDI into text with the software SMF2MML\footnote{\tt 
http://shaw.la.coocan.jp/smf2mml/}, with annotations removed). 

As for the randomized data listed in the last block, we took the text of {\em Moby 
Dick} and generated 10 different shuffled samples and bigram-generated sequences. We 
also introduced LSTM-generated texts to consider the utility of our findings, as 
explained in \secref{sec:lm}.

\section{Taylor Exponents for Real Data}
\begin{figure}[t]
	\centering
	\subcaptionbox{Moby Dick\label{subfig:moby}}{\includegraphics[clip, width=\pr\columnwidth]{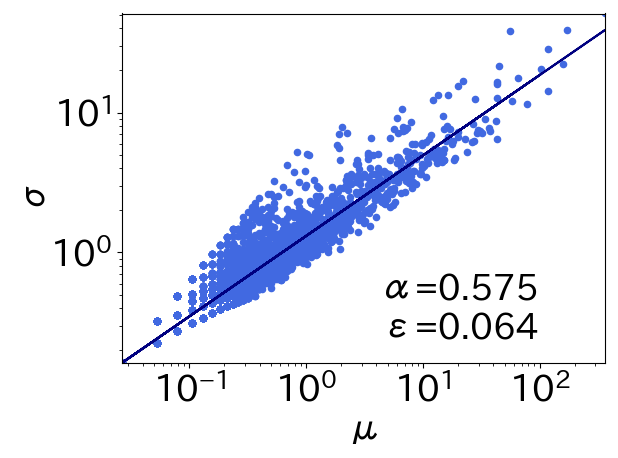}}%
	\subcaptionbox{Hong Lou Meng\label{subfig:koromu}}{\includegraphics[clip, width=\pr\columnwidth]{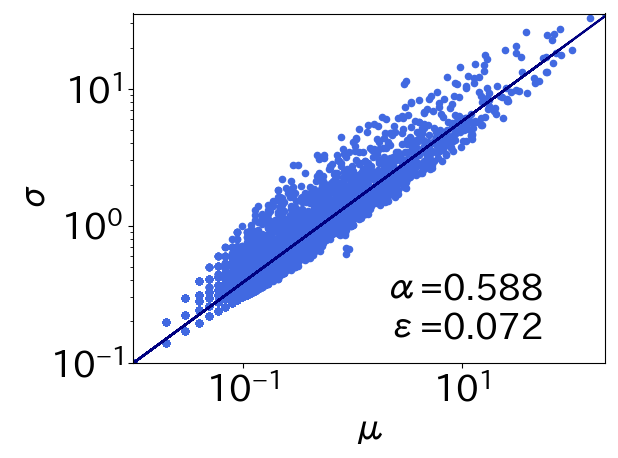}}
	\subcaptionbox{Wall Street Journal\label{subfig:wsj}}{\includegraphics[clip, width=\pr\columnwidth]{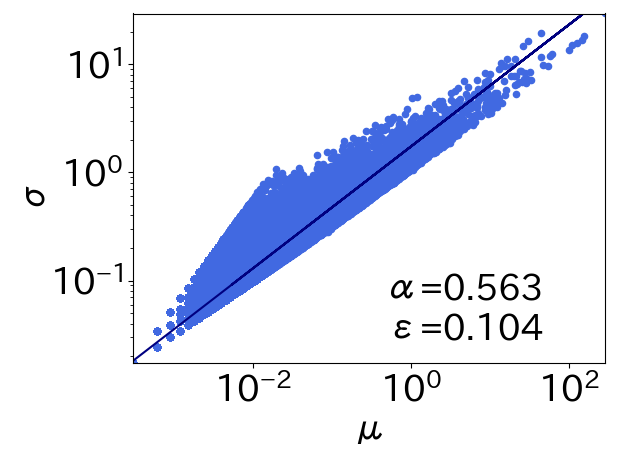}}%
	\subcaptionbox{People's Daily\label{subfig:peopledaily}}{\includegraphics[clip, width=\pr\columnwidth]{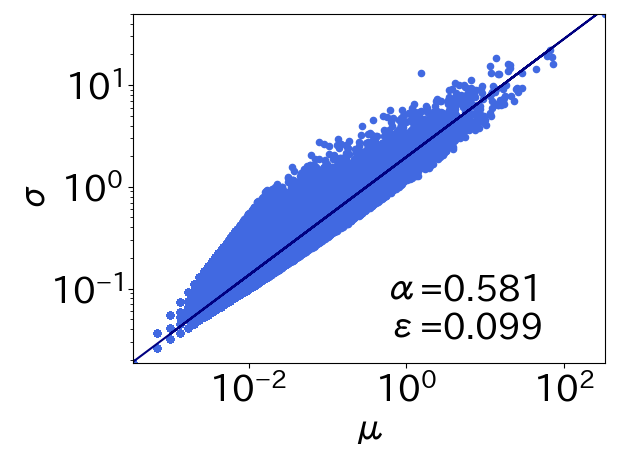}}
	\caption{\label{fig:fig1}
Examples of Taylor's law for natural language texts. {\em Moby Dick} and 
{\em Hong Lou Meng} are representative of single-author texts, and the two 
newspapers are representative of multiple-author texts, in English and Chinese, 
respectively. Each point represents a kind of word. The values of $\sigma$ and 
$\mu$ for each word kind are plotted across texts within segments of size $\Delta t = 
5620$. The Taylor exponents obtained by the least-squares method were all around 
0.58.}
\end{figure}

\figref{fig:fig1} shows typical distributions for natural language texts, with two 
single-author texts (\subref{subfig:moby} and \subref{subfig:koromu}) and two 
multiple-author texts (newspapers, \subref{subfig:wsj} and 
\subref{subfig:peopledaily}), in English and Chinese, respectively. The segment size 
was $\Delta t = 5620$ words\footnote{ In comparison, \figref{fig:growth_alpha} 
shows the effect on the exponent of varying $\Delta t$. As seen in that figure, larger 
$\Delta t$ increased the differences in exponent among different data sets, making the 
differences more distinguishable. Thus, $\Delta t$ had better  be as large as possible while 
keeping $\mu$ and $\sigma$ computable. For this article, we chose $\Delta t = 5620$, 
which was one of the $\Delta t$ values used in \figref{fig:growth_alpha}.}, i.e., each 
segment had 5620 words and the horizontal axis indicates the averaged frequency of a 
specific word within a segment of 5620 words.

The points at the upper right represent the most frequent words, whereas those at the 
lower left represent the least frequent. Although the plots exhibited different 
distributions, they could globally be considered roughly aligned in a power-law manner. 
This finding is non-trivial, as seen in other analyses based on Taylor's law 
\citep{taylor}. The exponent $\alpha$ was almost the same even though English and 
Chinese are different languages using different kinds of script.

As explained in \secref{subsec:method}, the Taylor exponent indicates the degree of 
consistent co-occurrence among words. The value of 0.58 obtained here suggests that 
the words of natural language texts are not strongly or consistently coherent with 
respect to each other. Nevertheless, the value is well above 0.5, and for the real data 
listed in \tabref{tab:data} (first to third blocks), not a single sample gave an exponent 
as low as 0.5.

Although the overall global tendencies in \figref{fig:fig1} followed power laws, many 
points deviated significantly from the regression lines. The words with the greatest 
fluctuation were often keywords. For example, among words in {\em Moby Dick} with 
large $\mu$, those with the largest $\sigma$ included {\em whale}, {\em captain}, 
and {\em sailor}, whereas those with the smallest $\sigma$ included functional words 
such as {\em to}, {\em that}, and {\em with}.

The Taylor exponent depended only slightly on the data size. \figref{fig:fig4} shows 
this dependency for the two largest data sets used, The New York Times (NYT, 1.5 
billion words) and The Mainichi (24 years) newspapers. When the data size was increased,
the exponent exhibited a slight tendency to decrease. For the NYT, the decrease seemed 
to have a lower limit, as the figure shows that the exponent stabilized at around 
$10^7$ words.

The reason for this decrease can be explained as follows. The Taylor exponent becomes 
larger when some words occur in a clustered manner. Making the text size larger 
increases the number of segments (since $\Delta t$ was fixed in this experiment). If the 
number of clusters does not increase as fast as the increase in the number of segments, 
then the number of clusters per segment becomes smaller, leading to a smaller exponent. 
In other words, the influence of each consecutive co-occurrence of a particular word 
decays slightly as the overall text size grows.

\begin{figure} [t]
	\centering
	\includegraphics[width=\columnwidth]{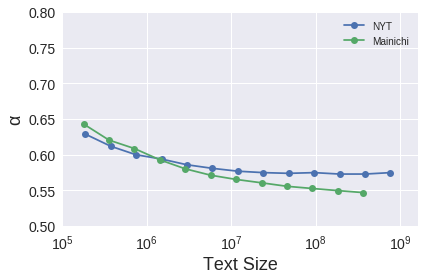}
	\vspace*{-0.8cm}
\caption{\label{fig:fig4} Taylor exponent $\hat{\alpha}$ (vertical axis) calculated for 
the two largest texts: The New York Times and The Mainichi newspapers. To evaluate the 
exponent's dependence on the text size, parts of each text were taken and the exponents 
were calculated for those parts, with points taken logarithmically. The window size was 
$\Delta t = 5620$. As the text size grew, the Taylor exponent slightly decreased.}
\end{figure}

\begin{figure}[t]
	\centering
	\subcaptionbox{enwiki8 (Wikipedia, tagged)\label{subfig:wiki}}{\includegraphics[clip, width=\pr\columnwidth]{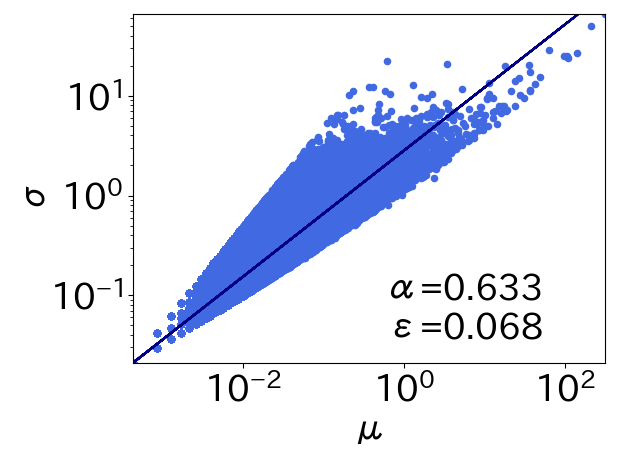}}%
	\subcaptionbox{Thomas (CHILDES)\label{subfig:thomas}}{\includegraphics[clip, width=\pr\columnwidth]{koromu_word_line2_w=5620.png}}
	\subcaptionbox{Lisp\label{subfig:lisp}}{\includegraphics[clip, width=\pr\columnwidth]{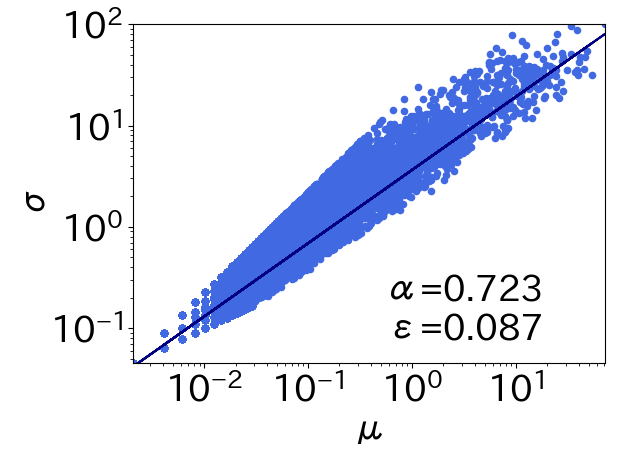}}%
	\subcaptionbox{Bach's {\em St Matthew Passion}\label{subfig:bach}}{\includegraphics[clip, width=\pr\columnwidth]{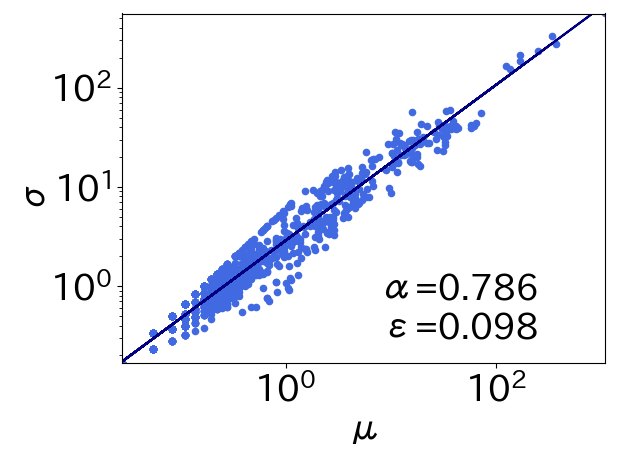}}
	\caption{\label{fig:fig2}
Examples of Taylor's law for alternative data sets listed in \tabref{tab:data}: 
enwiki8 (tag-annotated Wikipedia), Thomas (longest in CHILDES), Lisp source code, 
and the music of Bach's {\em St Matthew Passion}. These examples exhibited larger 
Taylor exponents than did typical natural language texts.}
\end{figure}

Analysis of different kinds of data showed how the Taylor exponent differed according 
to the data source. \figref{fig:fig2} shows plots for samples from enwiki8 (tagged 
Wikipedia), the child-directed speech of Thomas (taken from CHILDES), programming 
language data sets, and music. The distributions appear different from those for the 
natural language texts, and the exponents were significantly larger. This means that 
these data sets contained expressions with fixed forms much more frequently than did 
the natural language texts.

\begin{figure*} [t]
\centering
\includegraphics[width=0.8\textwidth]{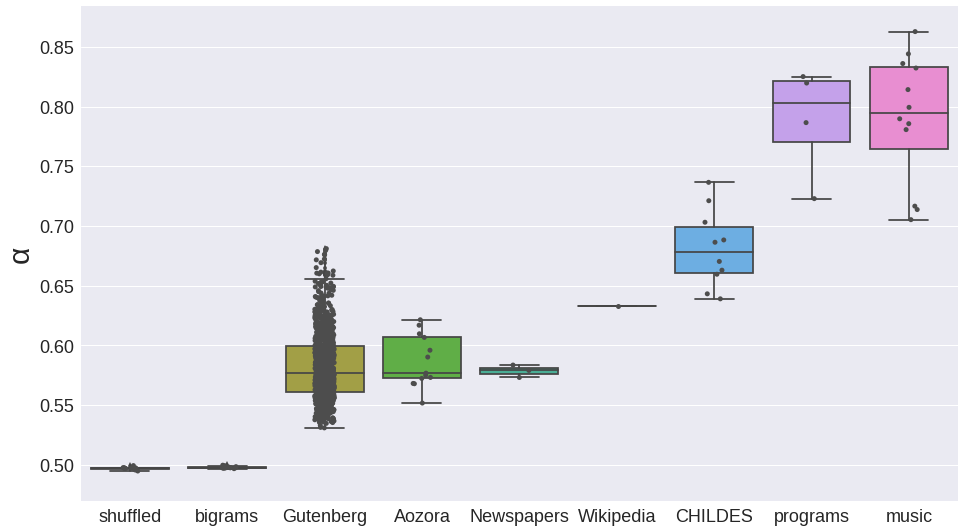}
%\vspace*{-0.5cm}
\caption{Box plots of the Taylor exponents for different kinds of data. Each point 
represents one sample, and samples from the same kind of data are contained in each 
box plot. The first two boxes are for the randomized data, while the remaining boxes are 
for real data, including both the natural language texts and language-related sequences. 
Each box ranges between the quantiles, with the middle line indicating the median, the 
whiskers showing the maximum and minimum, and some extreme values lying 
beyond.}
\label{fig:box}
\end{figure*} 

\figref{fig:box} summarizes the overall picture among the different data sources. The 
median and quantiles of the Taylor exponent were calculated for the different kinds of 
data listed in \tabref{tab:data}. The first two boxes show results with an exponent of 
0.50. These results were each obtained from 10 random samples of the randomized 
sequences. We will return to these results in the next section.

The remaining boxes show results for real data. The exponents for texts from Project 
Gutenberg ranged from 0.53 to 0.68. \figref{fig:hist} shows a histogram of these texts 
with respect to the value of $\hat{\alpha}$. The number of texts decreased 
significantly at a value of 0.63, showing that the distribution of the Taylor exponent was 
rather tight. The kinds of texts at the upper limit of exponents for Project Gutenberg 
included structured texts of fixed style, such as dictionaries, lists of histories, and 
Bibles.

\begin{figure} [t]
\centering
\includegraphics[width=0.95\columnwidth]{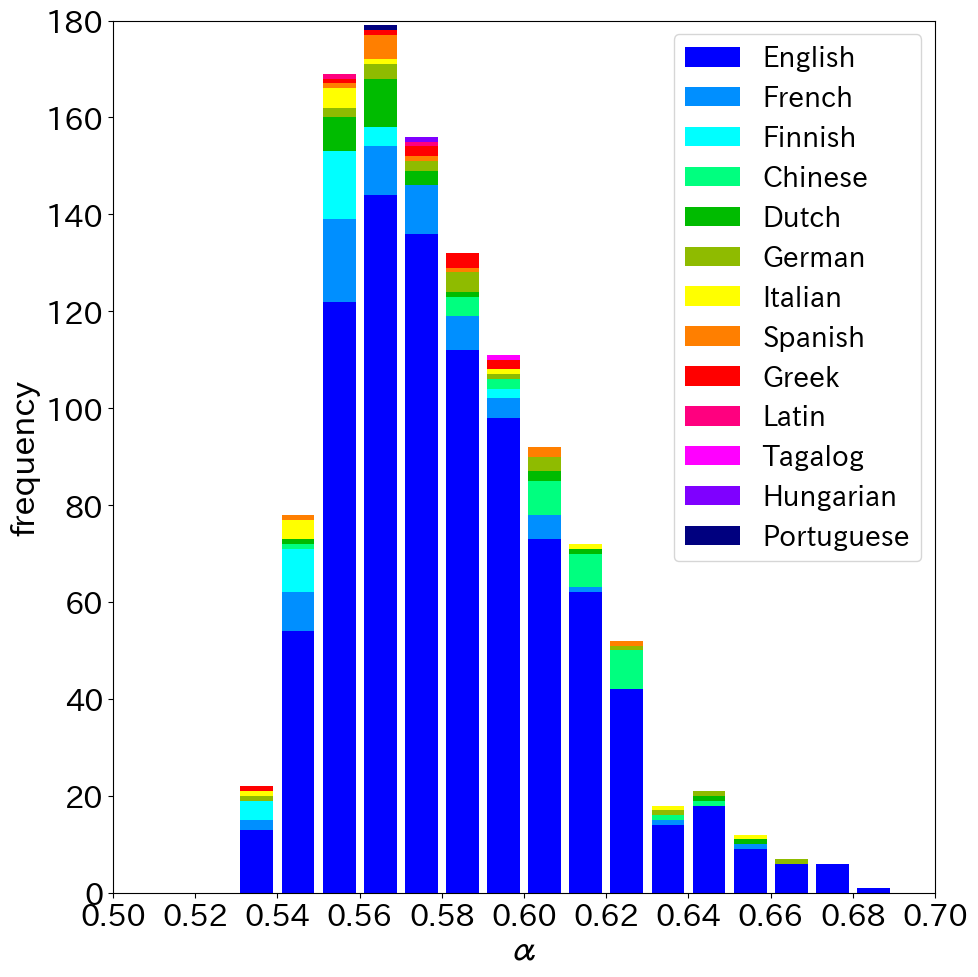}
%\vspace*{-0.5cm}
\caption{\label{fig:hist} Histogram of Taylor exponents for long texts in Project 
Gutenberg (1129 texts). The legend indicates the languages, in frequency order. Each 
bar shows the number of texts with that value of $\hat{\alpha}$. Because of the skew 
of languages in the original conception of Project Gutenberg, the majority of the texts 
are in English, shown in blue, whereas texts in other languages are shown in other 
colors. The histogram shows how the Taylor exponent ranged fairly tightly around the 
mean, and natural language texts with an exponent larger than 0.63 were rare.}
\end{figure} 

The majority of texts were in English, followed by French and then other languages, as 
listed in \tabref{tab:data}. Whether $\alpha$ distinguishes languages is a difficult 
question. The histogram suggests that Chinese texts exhibited larger values than did 
texts in Indo-European languages. We conducted a statistical test to evaluate whether 
this difference was significant as compared to English. Since the numbers of texts were 
very different, we used the non-parametric statistical test of the Brunner-Munzel method, 
among various possible methods, to test a null hypothesis of whether $\alpha$ was 
equal for the two distributions \citep{bm}. The p-value for Chinese was 
$p=1.24\times10^{-16}$, thus rejecting the null hypothesis at the significance level of 
0.01. This confirms that $\alpha$ was generally larger for Chinese texts than for 
English texts. Similarly, the null hypothesis was rejected for Finnish and French, but it 
was accepted for German and Japanese at the 0.01 significance level. Since Japanese 
was accepted despite its large difference from English, we could not conclude whether 
the Taylor exponent distinguishes languages.

Turning to the last four columns of \figref{fig:box}, representing the enwiki8, 
child-directed speech (CHILDES), programming language, and music data, the Taylor 
exponents clearly differed from those of the natural language texts. Given the 
template-like nature of these four data sources, the results were somewhat expected. The 
kind of data thus might be distinguishable using the Taylor exponent. To confirm this, 
however, would require assembling a larger data set. Applying this approach with 
Twitter data and adult utterances would produce interesting results and remains for our 
future work.

\begin{figure} [t]
\centering
\includegraphics[width=\columnwidth]{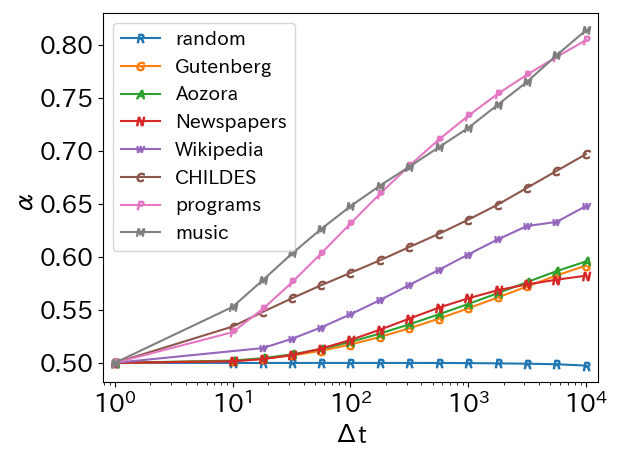}
%\vspace*{-1cm}
\caption{Growth of $\hat{\alpha}$ with respect to $\Delta t$, averaged across data 
sets within each data kind. The plot labeled ``random'' shows the average for the two 
datasets of randomized text from Moby Dick (shuffled and bigrams, as explained in 
\secref{sec:lm}). Since this analysis required a large amount of computation, for the 
large data sets (such as newspaper and programming language data), 4 million words 
were taken from each kind of data and used here. When $\Delta t$ was small, the 
Taylor exponent was close to 0.5, as theoretically described in the main text. As $\Delta 
t$ was increased, the value of $\hat{\alpha}$ grew. The maximum $\Delta t$ was 
about 10,000, or about one-tenth of the length of one long literary text. For the kinds of 
data investigated here, $\hat{\alpha}$ grew almost linearly. The results show that, at a 
given $\Delta t$, the Taylor exponent has some capability to distinguish different kinds 
of text data.\label{fig:growth_alpha}}
\end{figure} 

The Taylor exponent also differed according to $\Delta t$, and 
\figref{fig:growth_alpha} shows the dependence of $\hat{\alpha}$ on $\Delta t$. For 
each kind of data shown in \figref{fig:box}, the mean exponent is plotted for various 
$\Delta t$. As reported in \citep{taylor}, the exponent is known to grow when the 
segment size gets larger. The reason is that words occur in a bursty, clustered manner at 
all length scales: no matter how large the segment size becomes, a segment will include 
either many or few instances of a given word, leading to larger variance growth. This 
phenomenon suggests how word co-occurrences in natural language are self-similar. 
The Taylor exponent is initially 0.5 when the segment size is very small. This can be 
analytically explained as follows \cite{taylor}. Consider the case of $\Delta t$=1. Let 
$n$ be the frequency of a particular word in a segment. We have $\average{n} \ll 1.0$, 
because the possibility of a specific word appearing in a segment becomes very small. 
Because $\average{n}^2 \approx 0$, $\sigma^2 = \average{n^2} - \average{n}^2 
\approx \average{n^2}$. Because $n=1$ or $0$ (with $\Delta t$=1), $\average{n^2} 
= \average{n} = \mu$. Thus, $\sigma^{2} \approx \mu$. 

Overall, the results show the possibility of applying Taylor's exponent to quantify the 
complexity underlying coherence among words. Grammatical complexity was 
formalized by Chomsky via the Chomsky hierarchy \citep{chomsky}, which describes 
grammar via rewriting rules. The constraints placed on the rules distinguish four 
different levels of grammar: regular, context-free, context-sensitive, and phrase structure. 
As indicated in \citep{badii}, however, this does not quantify the complexity on a {\em 
continuous} scale. For example, we might want to quantify the complexity of 
child-directed speech as compared to that of adults, and this could be addressed in only 
a limited way through the Chomsky hierarchy. Another point is that the hierarchy is 
sentence-based and does not consider fluctuation in the kinds of words appearing. 

\section{Evaluation of Machine-Generated Text by the Taylor 
Exponent}
\label{sec:lm}

\begin{figure}[t]
	\centering
	\subcaptionbox{Moby Dick (shuffled)\label{subfig:taylor_shuffled}}{\includegraphics[clip, width=\pr\columnwidth]{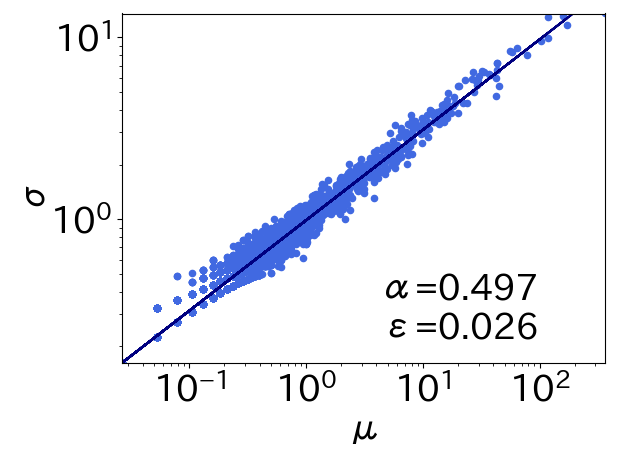}}%
	\subcaptionbox{Moby Dick (bigram)\label{subfig:taylor_bigram}}{\includegraphics[clip, width=\pr\columnwidth]{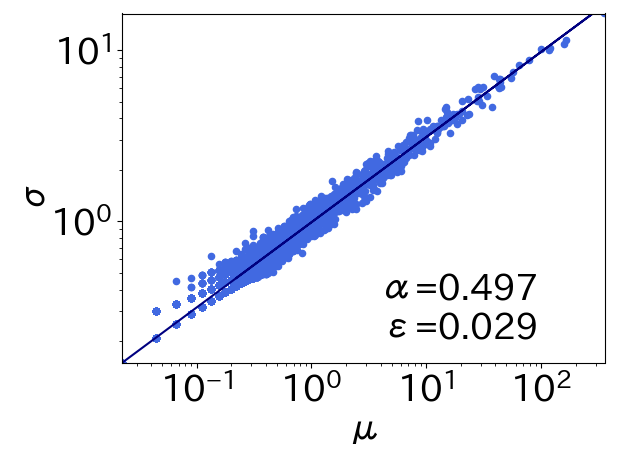}}
	\caption{\label{fig:random}
Taylor analysis of a shuffled text of {\em Moby Dick} and a randomized text 
generated by a bigram model. Both exhibited an exponent of 0.50.}
\end{figure}

The main contribution of this paper is the findings of Taylor's law behavior for real texts 
as presented thus far. This section explains the applicability of these findings, through 
results obtained with baseline language models.

As mentioned previously, i.i.d. mathematical processes have a Taylor exponent of 0.50. 
We show here that, even if a process is not trivially i.i.d., the exponent often takes a 
value of 0.50 for random processes, including texts produced by standard language 
models such as $n$-gram based models. A more complete work in this direction is 
reported in \citep{takahashi}.

\figref{fig:random} shows samples from each of two simple random processes. 
\figref{subfig:taylor_shuffled} shows the behavior of a shuffled text of {\em Moby 
  Dick}. Obviously, since the sequence was almost i.i.d. following Zipf distribution, 
the Taylor 
exponent was 0.50. Given that the Taylor exponent becomes larger for a sequence with 
words dependent on each other, as explained in \secref{sec:theory}, we would expect 
that a sequence generated by an $n$-gram model would exhibit an exponent larger than 
0.50. The simplest such model is the bigram model, so a sequence of 300,000 words 
was probabilistically generated using a bigram model of {\em Moby Dick}. 
\figref{subfig:taylor_bigram} shows the Taylor analysis, revealing that the exponent 
remained 0.50.

This result does not depend much on the quality of the individual samples. The first and 
second box plots in \figref{fig:box} show the distribution of exponents for 10 different 
samples for the shuffled and bigram-generated texts, respectively. The exponents were 
all around 0.50, with small variance.

\begin{figure}[t]
	\centering
	\subcaptionbox{Text produced by LSTM (3-layer stacked character-based)\label{subfig:lstm}}{\includegraphics[clip, width=\pr\columnwidth]{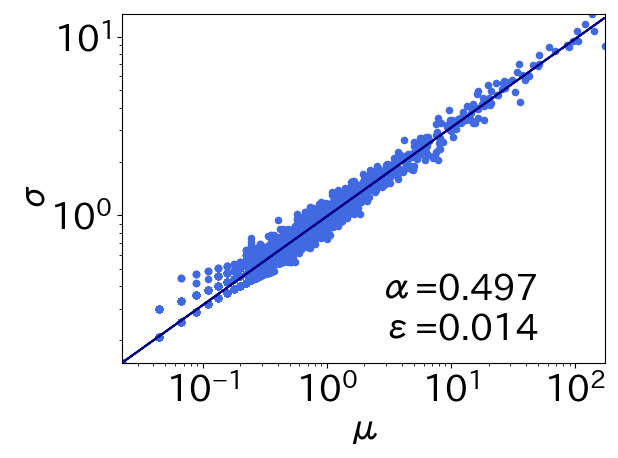}}%
	\subcaptionbox{Machine-translated text using neural language model\label{subfig:mt}}{\includegraphics[clip, width=\pr\columnwidth]{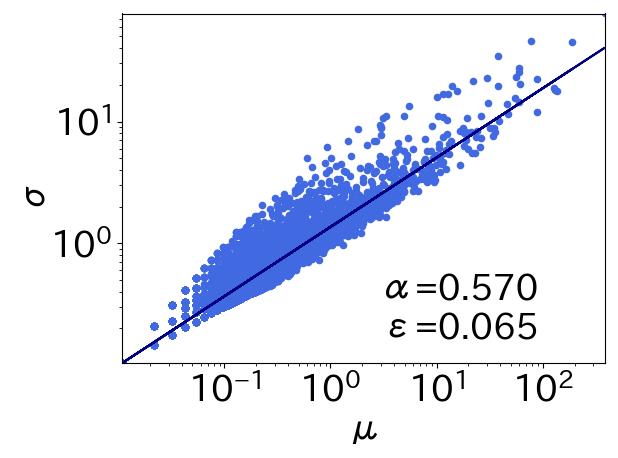}}
	\caption{\label{fig:dl}
Taylor analysis for two texts produced by standard neural language models: 
\subref{subfig:lstm} a stacked LSTM model that learned the complete works of 
Shakespeare; and \subref{subfig:mt} a machine translation of {\em Les 
Mis\'{e}rables} (originally in French, translated into English), from a neural language 
model. }
\end{figure}

State-of-the-art language models are based on neural models, and they are mainly 
evaluated by perplexity and in terms of the performance of individual applications. 
Since their architecture is complex, quality evaluation has become an issue. One 
possible improvement would be to use an evaluation method that qualitatively differs 
from judging application performance. One such method is to verify whether the 
properties underlying natural language hold for texts generated by language models. 
The Taylor exponent is one such possibility, among various properties of natural 
language texts.

As a step toward this approach, \figref{fig:dl} shows two results produced by neural 
language models. \figref{subfig:lstm} shows the result for a sample of 2 million 
characters produced by a standard (three-layer) stacked character-based LSTM unit that 
learned the complete works of Shakespeare. The model was optimized to minimize the 
cross-entropy with a stochastic gradient algorithm to predict the next character from the 
previous 128 characters. See \cite{plos18} for the details of the experimental settings. 
The Taylor exponent of the generated text was 0.50. This indicates that the 
character-level language model could not capture or reproduce the word-level clustering 
behavior in text. This analysis sheds light on the quality of the language model, separate 
from the prediction accuracy. 

The application of Taylor's law for a wider range of language models appears in 
\citep{takahashi}. Briefly, state-of-the-art word-level language models can generate 
text whose Taylor exponent is larger than 0.50 but smaller than that of the dataset used 
for training. This indicates both the capability of modeling burstiness in text and the 
room for improvement. Also, the perplexity values correlate well with the Taylor 
exponents. Therefore, Taylor exponent can reasonably serve for evaluating 
machine-generated text.

In contrast to character-level neural language models, neural-network-based machine 
translation (NMT) models are, in fact, capable of maintaining the burstiness of the 
original text. \figref{subfig:mt} shows the Taylor analysis for a machine-translated text 
of {\em Les Mis\'{e}rables} (from French to English), obtained from Google NMT 
\cite{Wu_2016}. We split the text into 5000-character portions because of the API's 
limitation (See \citep{plos18} for the details). As is expected and desirable, the translated text retains the clustering 
behavior of the original text, as the Taylor exponent of 0.57 is equivalent to that of the 
original text.

\section{Conclusion}
We have proposed a method to analyze whether a natural language text follows Taylor's 
law, a scaling property quantifying the degree of consistent co-occurrence among words. 
In our method, a sequence of words is divided into given segments, and the mean and 
standard deviation of the frequency of every kind of word are measured. The law is 
considered to hold when the standard deviation varies with the mean according to a 
power law, thus giving the Taylor exponent.

Theoretically, an i.i.d. process has a Taylor exponent of 0.5, whereas larger exponents 
indicate sequences in which words co-occur systematically. Using over 1100 texts 
across 14 languages, we showed that written natural language texts follow Taylor's law, 
with the exponent distributed around 0.58. This value differed greatly from the 
exponents for other data sources: enwiki8 (tagged Wikipedia, 0.63), child-directed 
speech (CHILDES, around 0.68), and programming language and music data (around 
0.79). These Taylor exponents imply that a written text is more complex than 
programming source code or music with regard to fluctuation of its components. None 
of the real data exhibited an exponent equal to 0.5. We conducted more detailed analysis 
varying the data size and the segment size.

Taylor's law and its exponent can also be applied to evaluate machine-generated text. 
We showed that a character-based LSTM language model generated text with a Taylor 
exponent of 0.5. This indicates one limitation of that model.

Our future work will include an analysis using other kinds of data, such as Twitter data 
and adult utterances, and a study of how Taylor's law relates to grammatical complexity 
for different sequences. Another direction will be to apply fluctuation analysis in 
formulating a statistical test to evaluate the structural complexity underlying a sequence. 

\section*{Acknowledgments}
This work was supported by JST Presto Grant Number JPMJPR14E5 and HITE funding. 
We thank Shuntaro Takahashi for offering his comments and providing the 
machine-generated data reported in \secref{sec:lm}.

\appendix
\section{Mathematical Proof of Taylor Exponent}
\label{proof}
Here we show that the Taylor exponent of an independent and identically distributed 
(i.i.d.) process is 0.5. A proof in a more general form is shown in \citep{taylor}. This is 
a known mathematical fact, as found previously in \citep{yule}.

\begin{prop}
The Taylor exponent of a sequence generated by an i.i.d. process is 0.5.
\end{prop}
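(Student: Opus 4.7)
The plan is to model the word counts in a segment as binomial random variables, and then read off the scaling relation directly. Concretely, fix a segment length $\Delta t$ and consider a word $w_k$ with occurrence probability $p_k$. Because the process is i.i.d., the count $N_k$ of $w_k$ in a segment is a sum of $\Delta t$ independent Bernoulli$(p_k)$ indicators, so $N_k \sim \mathrm{Bin}(\Delta t, p_k)$. This gives $\mu_k = \Delta t\, p_k$ and $\sigma_k^2 = \Delta t\, p_k(1-p_k)$ with no further computation.

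From there I would simply eliminate $p_k$ to obtain the Taylor relation. Writing $p_k = \mu_k / \Delta t$ yields
\begin{equation*}
\sigma_k \;=\; \sqrt{\mu_k\bigl(1 - \mu_k/\Delta t\bigr)} \;=\; \mu_k^{1/2}\sqrt{1 - p_k}.
\end{equation*}
Taking logarithms gives $\log \sigma_k = \tfrac{1}{2}\log \mu_k + \tfrac{1}{2}\log(1 - p_k)$, which is an affine function of $\log \mu_k$ with slope exactly $\tfrac{1}{2}$ up to the additive correction $\tfrac{1}{2}\log(1 - p_k)$. Hence the least-squares fit identifies $\alpha = 0.5$.

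The main thing to address carefully is the $\sqrt{1 - p_k}$ factor, which formally spoils a pure power law. I would handle it by two complementary remarks. First, in any realistic vocabulary one has $p_k \ll 1$ for essentially every word (indeed Zipf's law forces the bulk of $p_k$ to be very small), so $\sqrt{1 - p_k} = 1 - O(p_k)$ and the correction is negligible on the log scale. Second, one can take the standard rare-event limit $\Delta t \to \infty$, $p_k \to 0$, $\mu_k = \Delta t\, p_k$ fixed, in which the binomial converges to a Poisson with $\sigma_k^2 = \mu_k$ exactly, making $\sigma_k = \mu_k^{1/2}$ an identity rather than an approximation. This is the step I expect to be the only nontrivial point, since everything else is a one-line moment calculation; the rest of the proof is essentially bookkeeping.
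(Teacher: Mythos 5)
Your proposal is correct and takes essentially the same route as the paper's proof: both reduce to the observation that the per-segment count of $w_k$ is a sum of $\Delta t$ i.i.d. Bernoulli$(p_k)$ indicators, so $\mu_k = \Delta t\,p_k$ and $\sigma_k^2 = \Delta t\,p_k(1-p_k)$, giving $\sigma_k^2 \propto \mu_k$ and hence $\alpha = 0.5$. If anything, you are more careful than the paper: the paper's final identity $\sigma_k^2 = (\mathbb{V}[X_i]/\mathbb{E}[X_i])\,\mu_k$ hides the same word-dependent factor $1-p_k$ in its ``constant,'' whereas you explicitly justify why that factor does not perturb the fitted exponent (via $p_k \ll 1$ or the Poisson limit).
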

\begin{proof}
Consider i.i.d. random variables $X_1, \ldots, X_i, \ldots, X_N$, where $i$ denotes the 
location within a text. For a specific word $w_k \in W$, with $W$ being the set of 
words, let $p_k$ denote the probability of occurrence of word $w_k$, i.e., 
$\mathbb{P}(X_i = w_k) = p_k$ (for all $i$). Naturally, the expectation 
$\mathbb{E}$ and variance $\mathbb{V}$ of the count of $w_{k}$ for 
$X_{i}$ are the following:
\begin{eqnarray}
	\mathbb{E}[X_i] &=& p_k, \label{eq:expectation} \\
	\mathbb{V}[X_i] &=& p_k(1 - p_k),
\end{eqnarray}
which only depend on the constant $p_k$. With window size $\Delta t$, $\mu_k = 
\Delta t\mathbb{E}[X_i]$. Note that $\sigma_k^2 = \Delta t\mathbb{V}[X_i]$, 
because
\begin{eqnarray*}
	\sigma_{k}^{2} &=& \mathbb{V} \left[\sum_{i=1}^{\Delta t} X_i\right] \\
	&=& \mathbb{E} \left[\left(\sum_{i=1}^{\Delta t} (X_i - p_k)\right)^2\right] \\
	&=& \mathbb{E} \left[\sum_{i=1}^{\Delta t} (X_i - p_k)^2\right.\\
	&& \quad\left. + 2\sum_{i \neq j}(X_i - p_k)(X_j - p_k)\right] \\
 &=&\mathbb{E} \left[\sum_{i=1}^{\Delta t} (X_i - p_k)^2\right]\\
	&=& \sum_{i=1}^{\Delta t}\mathbb{V}[X_i]\\
	&=& \Delta t \mathbb{V}[X_{i}].
\end{eqnarray*}
Furthermore, note that $\mathbb{E}[(X_i - p_k)(X_j - p_k)] = 0$ for every $i, j$ with 
$i\neq j$, because $X_{i}$ and $X_{j}$ are independent of each other and 
\eqref{eq:expectation} holds. Therefore, Taylor exponent $\alpha$ of an i.i.d. process 
is 0.5, because
\begin{equation*}
	\sigma_k^2 = \frac{\mathbb{V}[X_i]}{\mathbb{E}[X_i]} \mu_k.
\end{equation*}
\end{proof}

\bibliographystyle{compling}
\bibliography{taylor}

\end{document}